\documentclass[journal]{IEEEtran}

\usepackage{mathtools,cite}
\usepackage{graphicx}
\usepackage[caption=false,font=footnotesize]{subfig}
\usepackage{amsmath}
\usepackage{amssymb}
\usepackage{amsfonts}
\usepackage{amsthm}
\usepackage{bm}
\usepackage{algorithm}
\usepackage{algorithmic}
\usepackage{booktabs}
\usepackage{array}
\usepackage{xurl}
\usepackage{microtype}
\usepackage{hyperref}

\newtheorem{assumption}{Assumption}
\newtheorem{theorem}{Theorem}
\newtheorem{corollary}{Corollary}

\newtheorem{assertionprocess}{Assertion process}

\newcommand{\vect}[1]{\boldsymbol{#1}} 

\newcommand{\Prob}{\mathbb{P}}

\newcommand{\one}{\mathbf{1}}

\newcommand{\calH}{\mathcal{H}}

\newcommand{\calW}{\mathcal{W}}
\newcommand{\BetaInv}{\mathrm{B}^{-1}}
\newcommand{\Bin}{\mathrm{Binomial}}

\begin{document}

\title{Finite-Sample Probabilistic Safety Certification for AI-Based Grid-Edge Coordination}

\author{Yihong Zhou,~\IEEEmembership{Member,~IEEE}, Hanbin Yang, and Thomas Morstyn,~\IEEEmembership{Senior Member,~IEEE}%
\vspace{-3mm}
\thanks{Department of Engineering Science, University of Oxford, U.K.}%
\thanks{Corresponding author: Yihong Zhou (e-mail: yihong.zhou@eng.ox.ac.uk).}}

\markboth{Submitted to IEEE Transactions on Smart Grid}{}

\maketitle

\begin{abstract}
Coordinating large population of flexible grid-edge devices can alleviate the need for time-consuming and capital-intensive network upgrades, and AI-based control methods such as multi-agent reinforcement learning or imitation learning are promising in their real-time decision scalability. However, system operators still need an independent and rigorous way to decide whether a given AI system is safe enough for deployment. This paper develops a finite-sample probabilistic safety certification framework for black-box AI decision models in closed-loop grid operation. The central idea is to reduce the complete input--AI--grid evaluator workflow to a binary unsafe outcome under an operator-defined safety specification, and then use exact binomial inference to certify the corresponding unsafe operation probability. Given a set of held-out calibration scenarios, the framework returns the tightest one-sided upper certificate and an accept/reject deployment criterion that controls the probability of false safety certification. Because the certification is for the calibration distribution that may deviate from the future operation, we further combine the nominal certificate with physically interpretable sample-space adversarial attacks, a concept widely used in AI to investigate the fragility of AI models. Case studies on grid-edge flexibility coordination with 1{,}000-agent AI models (independent parameters) verify the finite-sample safety guarantee and the value of integrating adversarial attacks into a rolling-window training-certification-deployment flow.
\end{abstract}

\begin{IEEEkeywords}
AI safety, grid-edge flexibility, Clopper--Pearson interval, finite-sample certification, adversarial attacks.
\end{IEEEkeywords}

\section{Introduction}
\label{sec:introduction}

\IEEEPARstart{P}{ower} grids are facing increasing stress from variable renewable generation and millions of electrified grid-edge devices such as electric vehicles, heat pumps, and distributed batteries. Smart coordination strategies that exploit this flexibility can defer capital-intensive network upgrades, with a global value estimated at USD~\$270~billion by 2040 \cite{iea_value}. Such coordination problems motivate AI-based solutions because of their high complexity and scale. Existing studies have demonstrated the performance of AI-based control such as multi-agent reinforcement learning and imitation learning \cite{qiu2023reinforcement, charbonnier2025centralised}. Recent work has also pushed the scale to 1{,}000 independent AI agents coupled through non-convex AC distribution network constraints \cite{zhou2026gradmap}. These works show a promising picture that millions of flexible devices could be coordinated by AI-based control systems, helping to unlock substantial value in terms of lower operating cost, increased reliability and lower pollution.

The main challenge is that the black-box nature and limited explainability of AI systems can prevent system operators from deploying such models with confidence \cite{horizon2024_tef_energy, su2025review}. Existing studies that address this issue can be broadly categorised into safe learning and formal verification. Safe learning investigates approaches to \emph{improve} safety of an AI system by modifying the learning or decision process, where a popular sub-category is safe reinforcement learning (RL) \cite{su2025review}. This includes primal-dual learning that continuously increases the violation penalty \cite{chow2018risk}, embedding risk-averse metrics as learning signals \cite{chow2018risk}, integrating Lyapunov functions into the model structure \cite{perkins2002lyapunov}, or safety layer \cite{yi2023real} and shielding \cite{chen2022physics} that fixes or prevent unsafe actions.

Although safe learning can improve AI behavior during training and action selection, an operator facing a deployment request still needs an independent way to verify whether the AI-enabled power grid operation system is safe. This verification step is often the final check before deployment, and is the core difference between formal verification (our main focus) and safe learning. Broadly speaking, verification aims to check the correctness of a system against a system specification expressed in mathematical logic, and AI verification is a subset of this broader area \cite{lindemann2025formal}. The strictest methods in this category aim to verify the system as a whole by exhaustively checking all possible system behaviors, and automated verification tools have been developed, such as model checking \cite{baier2000model} or automatic theorem proving \cite{shoukry2017smc}. Recent work has also developed a toolbox \cite{george2026torchlean} to formally verify neural networks in Lean \cite{avigad2015theorem}, a programming language and proof assistant based on type theory. These methods are theoretically sound and rigorous but the computational complexity limits their applications on large-scale problems. In power systems, rigorous verification approaches verify the safe input region or the output range of a neural network, such as through mixed-integer optimisation formulations, which have been applied to verify neural-network classifiers for $N\!-\!1$ security, small-signal stability, and OPF surrogate~\cite{venzke2021verification,venzke2020learning}. Scalability continues to improve through bound propagation with approximations and GPU-accelerated tools, including alpha-beta-CROWN and power system-specific verification methods~\cite{wang2021beta,chevalier2024gpu,nellikkath2024scalable}. State-of-the-art GPU-accelerated methods can scale to neural networks with millions of parameters; in power system applications, the state-of-the-art alpha-beta-CROWN has been demonstrated up to ML4ACOPF benchmarks with roughly 680k parameters and 402-dimensional inputs \cite{kaulen20256th}, and on grid-scale studies up to 300-bus DC-OPF and 500-bus transient-stability cases \cite{chevalier2024gpu, su2025neural}. However, considering millions of flexible grid-edge devices, each with its own AI model and complex power system coupling, scalability remains a challenge.


For power systems to benefit from AI-based coordination \emph{safely}, one computationally scalable approach for the near to medium term may be \emph{statistical verification}, which is a ``relaxed'' formal verification technique that allows checking a fraction of system behaviours \cite{lindemann2025formal}. It samples executions, evaluates whether each execution satisfies a prescribed property, and uses confidence intervals or hypothesis tests to make finite-sample statements about the satisfaction probability \cite{zarei2020statistical}. This process implies that the computational complexity can remain roughly the same order as model inference. Such a finite-sample guarantee provides evidence even when the available data are scarce, thereby making a stronger statement than a simple but commonly used evaluation on a held-out validation set, for which a guarantee can only be achieved asymptotically as the amount of data approaches infinity. Therefore, this paper focuses on a statistical verification-based approach. Note that this concept is also related to scenario-based chance-constrained optimisation with probabilistic satisfaction guarantees~\cite{geng2019data}, but the objective here is chance-constraint evaluation rather than optimisation. 

Statistical verification in power system applications is limited. In \cite{ling2026dynamic}, statistical verification is applied to power system optimisation methods but not AI approaches. Another work, \cite{cheng2023safeguarding}, applies a statistical model checking method to evaluate an RL-based power system controller, but the tightness of the probabilistic guarantee was not discussed and the case study remains preliminary.
Conformal prediction is a popular statistical verification method~\cite{lindemann2025formal}, and has been applied to power system optimisation~\cite{stratigakos2026decision} and power system AI \cite{ellinas2026verification}, but it aims to verify the output range given an (random) input, which is useful as an extra robust or trustworthiness layer of an AI model \cite{alcantara2026trustworthiness}. In contrast, our focus is on verifying the safety probability of the AI system itself, which could be a more direct certification. It should be noted that the first stage of AI verification for power system applications will generally be performed on simulators, because it is unlikely that an unverified AI system can be tested on the real grid. In this sense, even the strongest formal AI verification does not provide a 100\% rigorous safety guarantee against simulator mismatch.

In addition to the lack of systematic approach of statistical verification of power system AI, the other major gap is around the standard statistical verification itself. In standard statistical verification, the finite-sample guarantee is established w.r.t. the distribution of a held-out calibration dataset used to perform the verification, which can differ from the actual grid conditions due to various sources of distribution shifts including the change of operating conditions, the inaccuracy of the grid simulator, or the impact of the implemented decision itself. Existing work has investigated distributionally robust certification that holds for a set of plausible distributions \cite{zhao2025distributionally}, but the distributional distance that controls the range of plausible distributions is often difficult to physically interpret.

This paper addresses the two major gaps in power system AI safety with the following contributions:

\begin{itemize}
    \setlength{\itemsep}{0pt}
    \item We propose a systematic finite-sample probabilistic safety certification framework for power grid AI. With a held-out calibration dataset of any size, the method generates the \emph{tightest} upper confidence bound on the unsafe operation probability for any safety specification at a user-specified confidence level. Its dominant computational cost is of the same order as ordinary closed-loop inference on the calibration set.
    \item To address distribution shifts while remaining physically interpretable,
    We develop a worst-case probability certification approach motivated by the adversarial attack concepts in AI \cite{goodfellow2014explaining}. Under a perturbation budget at the sample space (e.g., demand or generation scenarios) instead of the distribution space, the proposed approach approximates the worst-case unsafe operation probability.
    \item We demonstrate the framework on an IEEE 123-bus grid-edge flexibility case study with 1{,}000-agent AI decision models, each with different parameters, and include a realistic rolling-window safety certification study that verifies the effectiveness of the proposed framework.
\end{itemize}

The remainder of this paper is organized as follows. Section~\ref{sec:safety_certification_problem} defines the closed-loop certification problem. Section~\ref{sec:finite_sample_safety_certificates} gives the finite-sample certificate and deployment criterion. Section~\ref{sec:sample_space_robust_certification} integrates the adversarial attack into the certification to enhance safety with an additional robustness layer. Section~\ref{sec:case_studies} presents the case studies. Section~\ref{sec:discussion} discusses interpretation and limitations, and Section~\ref{sec:conclusion} concludes the paper.

\section{Safety Certification Problem}
\label{sec:safety_certification_problem}

Most AI systems used in grid operation can be abstracted as an input--AI--output process. When a grid operator assesses system safety, a natural question is: given a set of input scenarios that the grid may encounter, what outputs will the AI system generate, and are these outputs safe for the grid? The operator may already have scenarios that are considered representative of actual operating conditions, as scenario-based testing is common in grid operation. However, the key challenge is how such a dataset can be used to derive a rigorous statistical guarantee, which is not fully addressed by current grid-operator testing. 

The above process is intentionally general. There is no restriction on how the AI system uses the inputs: it may use them in a one-shot generation style, a sequential recursive manner, or another workflow. The input scenario can also be general. It may be a single operating point, a system initial state, or a tuple containing the system initial state and multivariate time series, such as locational demand, renewable generation, temperature, device availability, and contingency events. The length of each input scenario depends on the operator's certification objective. The AI system can also be arbitrary, e.g., different neural network structures or decision rules trained by RL, imitation learning, self-supervised learning, or other AI methods.

\subsection{Mathematical Formulation}
The above process can be modelled mathematically. Let $W$ denote a random operating input scenario sampled from the unknown operating distribution $P_0$ of interest. Let $\pi$ denote the AI system, which maps the observed system condition in $W$ to outputs, such as a control sequence. We assume that the system operator has a simulator to determine whether the AI outputs are safe for a pre-set safety threshold $\nu$, through
\begin{equation}
    L_\nu (\pi, W) \coloneqq  \one(M(\pi, W) \geq \nu),
\end{equation}
with $L_\nu (\pi, W)\in \{0,1\}$ the binary safe/unsafe outcome (1 represents unsafe and 0 otherwise), and $M(\pi, W) \in \mathbb{R}$ a real-valued safety score with an operator-defined threshold $\nu$. This is a practical assumption, as the power industry already uses a suite of analysis tools for power system safety across different time scales, from electromagnetic transients to quasi-steady-state power flow and from normal conditions to contingencies and faults. The safety score $M(\pi, W)$ can be the line current, nodal voltage, frequency nadir, short-circuit current magnitude, etc., and can be the maximum of multiple metrics if desired. The calculation of $M(\pi, W)$ and thus $L_\nu(\pi, W)$ should be performed on simulators, as testing an unverified AI system on the real grid is unlikely to be acceptable. Simulators inevitably introduce an unknown sim-to-real gap, which can be viewed as epistemic uncertainty and investigated through the proposed adversarial attack in Section~\ref{sec:sample_space_robust_certification}.

With this in mind, let $\calH$ denote the simulator that evaluates grid safety, so the workflow can be expressed as:
\begin{equation}
    W \xrightarrow{\;\pi\;} \pi(W)
    \xrightarrow{\;\calH\;} M(\pi, W) \xrightarrow{\;\nu\;} L_\nu(\pi,W)\in\{0,1\},
    \label{eq:threshold_safety_process}
\end{equation}
Consider an illustrative example in which a multi-agent AI system is deployed to manage grid-edge devices, reduce device owners' energy bills, and respect distribution network constraints. As this is an energy management application, network safety can be checked primarily by running quasi-steady-state power flow given the AI outputs to obtain every nodal phase voltage $V_{i,\phi,t}(\pi,W)$ for node $i$, phase $\phi$, and time step $t$. Because distribution networks are often limited by voltage constraints, one can define a threshold-based safety outcome indicating whether the maximum voltage magnitude deviation over the certification horizon exceeds a prescribed limit:
\begin{equation}
    L_\nu(\pi,W)=
    \one\left\{ \max_{t,i,\phi}
    \left||V_{i,\phi,t}(\pi,W)|-1\right|\geq\nu\right\}.
    \label{eq:voltage_safety_outcome}
\end{equation}
The threshold can be $\nu=0.03$ or $\nu=0.05$ p.u., depending on the safety specification of the distribution network operator. 

\subsection{Calibration Setup and Objective}

A core step in statistical verification is to obtain a dataset for the property to be verified; here, this refers to the binary safety outcome $L_\nu(\pi,W)$. The dataset used to evaluate $L_\nu(\pi,W)$ should ideally have the same distribution as future grid operating conditions. Based on Eq.~\eqref{eq:threshold_safety_process}, the dataset for $L_\nu(\pi,W)$ can be constructed from input scenario data. Let $W^1,\ldots,W^n$ denote independent random samples drawn from the distribution of the input $W$, namely $P_0$. We collect these samples into the calibration dataset
\begin{equation}
    D_n\coloneqq (W^1,\ldots,W^n)\sim P_0^n .
    \label{eq:calibration_dataset}
\end{equation}
The calibration dataset is, therefore, itself a random object. We also need the following core sampling assumptions.
\begin{assumption}\label{ass:stated_calibration_distribution}
The calibration scenarios $W^1,\ldots,W^n$ are independent and identically distributed (i.i.d.) samples from $P_0$, which is the distribution of future operating scenarios against which the AI system is to be verified.
\end{assumption}
Note that assumption~\ref{ass:stated_calibration_distribution} imposes independence only across calibration scenarios. The components within each scenario $W^i$ may exhibit spatial and temporal dependence.

The statistical verification target is the unsafe operation probability:
\begin{equation}
    p_\pi=\Prob_{W\sim P_0}\left(L_\nu(\pi,W)=1\right).
    \label{eq:true_binary_probability}
\end{equation}
The corresponding safe operation probability is $1-p_\pi$. This probability answers the question posed at the beginning of the section: \emph{given a set of input scenarios that the grid may encounter, specified by $P_0$, what is the probability that the AI system is unsafe for the grid?}

\section{Finite-Sample Probabilistic Safety Certificates}
\label{sec:finite_sample_safety_certificates}

\subsection{Bernoulli Reduction and Binomial Conversion}

By the definition of $p_\pi$ in \eqref{eq:true_binary_probability}, the certification target is the probability that the binary safety outcome equals one. This is the key step that reduces the problem's complexity: although the operating-scenario distribution $P_0$ over $W$ may be complex and difficult to characterize directly, the safety-certification problem only depends on the induced distribution of $L_\nu(\pi,W)$, which is Bernoulli by definition. Evaluating $L_\nu(\pi,W^j)$ over the calibration dataset then provides repeated draws from this Bernoulli distribution. The unsafe-outcome count is
\begin{equation}
    K_n(D_n)=\sum_{j=1}^n L_\nu(\pi,W^j).
    \label{eq:violation_count}
\end{equation}
Under Assumption~\ref{ass:stated_calibration_distribution},
\begin{equation}
    K_n(D_n)\sim \Bin(n,p_\pi).
    \label{eq:binomial_model}
\end{equation}
Now the certification becomes a finite-sample inference for a random variable with a binomial distribution. 

\subsection{One-Sided Exact Upper Certificate}

For a realized calibration dataset, the count $K_n(D_n)=k$ is a single observation from the binomial distribution in~\eqref{eq:binomial_model}. Clopper--Pearson (CP) confidence intervals provide exact finite-sample confidence bounds for the Bernoulli probability. From the safety certification perspective, the relevant quantity is an upper bound on the unsafe operation probability, so we use the one-sided CP upper confidence bound~\cite{clopper1934use,cai2005one}
\begin{equation}
    U(k;n,\alpha)=
    \begin{cases}
    1, & k=n,\\
    \BetaInv(1-\alpha;k+1,n-k), & k<n,
    \end{cases}
    \label{eq:cp_upper}
\end{equation}
where $\BetaInv(q;a,b)$ denotes the $q$-quantile of the Beta distribution with shape parameters $(a,b)$.

\begin{theorem}\label{thm:pointwise_finite_sample_safety_certificate}
For any fixed AI system $\pi$, calibration size $n$, and confidence parameter $\alpha\in(0,1)$,
\begin{equation}
    \Prob_{D_n\sim P_0^n}\left(p_\pi\leq U(K_n(D_n);n,\alpha)\right)\geq 1-\alpha .
    \label{eq:cp_coverage}
\end{equation}
\end{theorem}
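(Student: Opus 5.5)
The plan is the standard exact-test inversion for the one-sided Clopper--Pearson bound. By \eqref{eq:binomial_model}, $K_n(D_n)\sim\Bin(n,p_\pi)$ under Assumption~\ref{ass:stated_calibration_distribution}, so the probability in \eqref{eq:cp_coverage} depends only on $p=p_\pi$, and it suffices to show $\Prob(p\le U(K;n,\alpha))\ge 1-\alpha$ for $K\sim\Bin(n,p)$ and every fixed $p\in[0,1]$. If $p=0$ this is trivial since $U\ge 0$. The event can fail only when $K<n$, because $U(n;n,\alpha)=1$.

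The first key step is the Beta--binomial identity: for $k<n$,
\[
\Prob_{X\sim\Bin(n,q)}(X\le k)=1-F_{\mathrm{Beta}(k+1,n-k)}(q),
\]
which follows by differentiating the left side in $q$ (telescoping sum) or from order statistics of uniforms. This identity gives an equivalent description of $U$: $u_k:=U(k;n,\alpha)$ is the unique $q$ with $F_{\mathrm{Beta}(k+1,n-k)}(q)=1-\alpha$, i.e.\ with $G_k(q):=\Prob_q(X\le k)=\alpha$. Since $G_k$ is continuous and strictly decreasing in $q$ on $[0,1]$, we have $p>u_k$ if and only if $G_k(p)<\alpha$. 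Moreover $u_k$ is nondecreasing in $k$, because $G_k(q)\le G_{k+1}(q)$ pointwise.

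The second step is the monotonicity argument. Fix $p\in(0,1]$ and let $k^*$ be the largest $k<n$ with $u_k<p$. If no such $k$ exists, coverage holds with probability one. Otherwise, by monotonicity of $u_k$, the failure event $\{p>U(K)\}$ equals $\{K\le k^*\}$. Its probability is $G_{k^*}(p)<\alpha$ by the equivalence above. Hence the coverage probability is at least $1-\alpha$.

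I expect the main obstacle to be proving the Beta--binomial identity cleanly and checking the strict monotonicity and continuity of $G_k$. The derivative computation $\frac{d}{dq}G_k(q)=-n\binom{n-1}{k}q^k(1-q)^{n-1-k}$ settles both, and matches the Beta$(k+1,n-k)$ density up to sign. The edge cases $k=n$ and $p\in\{0,1\}$ need separate but trivial handling; at $p=1$, $K=n$ almost surely, so coverage holds.
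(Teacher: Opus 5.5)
Your proof is correct. It begins with the same reduction as the paper: $K_n(D_n)\sim\Bin(n,p_\pi)$, so coverage depends only on $p_\pi$. The paper stops there and invokes the known coverage property of the one-sided Clopper--Pearson bound as a classical result. You instead prove that property.

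You use the Beta--binomial identity to characterise $U(k;n,\alpha)$, for $k<n$, as the unique root of $\Prob_q(X\le k)=\alpha$. Test inversion together with monotonicity in $k$ then shows that the non-coverage event is exactly a lower tail $\{K\le k^*\}$, whose probability $\Prob_{p_\pi}(K\le k^*)$ is strictly below $\alpha$. I checked the identity, the derivative formula, the monotonicity step $u_k\le u_{k+1}$, and the edge cases $k=n$ and $p_\pi\in\{0,1\}$; all are sound.

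The paper's route is shorter and relies on the literature for correctness. Yours is self-contained and yields slightly more:
\begin{itemize}
\item It exhibits the exact non-coverage probability as a binomial tail, which makes visible why coverage generally exceeds $1-\alpha$ because of discreteness.
\item It establishes that $U(k;n,\alpha)$ is nondecreasing in $k$. The paper asserts this without proof in the proof of Theorem~\ref{thm:adversarial_cp_dominance}, and it is also the monotonicity class in the tightness result.
\end{itemize}
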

\begin{proof}
For fixed $\pi$, $K_n(D_n)$ follows the binomial model in \eqref{eq:binomial_model}. Eq.~\eqref{eq:cp_coverage} is the coverage property of the one-sided Clopper--Pearson interval~\cite{clopper1934use,cai2005one}.
\end{proof}
The interpretation of the probability in \eqref{eq:cp_coverage} is as follows: if this certification procedure is repeated $K$ times, with a fresh calibration dataset $D_n$ drawn in each experiment, then the inequality $p_\pi\leq U(K_n(D_n);n,\alpha)$ holds in at least a fraction $1-\alpha$ of the repetitions in the long run. This is roughly equivalent to saying that at least $1-\alpha$ proportion (in the long run) of the certifications made by a system operator will be correct.

Moreover, our proposed certificate through the one-sided CP upper confidence bound is the tightest possible:
\begin{theorem}
The certificate \(U(k;n,\alpha)\) is the smallest among all non-decreasing upper confidence bounds (i.e., more observed unsafe events lead to a higher or unchanged bound) with finite-sample coverage of at least \(1-\alpha\) \cite[Theorem~2]{wang2006smallest}.
\end{theorem}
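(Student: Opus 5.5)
The plan is a pointwise comparison. Fix any non-decreasing upper bound $V(k)$, $k=0,\dots,n$, whose coverage satisfies $\Prob_p(p\le V(K))\ge 1-\alpha$ for every $p\in[0,1]$ when $K\sim\Bin(n,p)$. I will show that $V(k)\ge U(k;n,\alpha)$ for every $k$.

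The first step is to restate \eqref{eq:cp_upper} through the binomial CDF. Write $F_p(k)=\Prob_p(K\le k)$. The standard Beta--binomial identity says $F_p(k)$ equals the upper tail probability of a $\mathrm{Beta}(k+1,n-k)$ variable at $p$. It follows that, for $k<n$, $U(k)$ is the unique $p$ with $F_p(k)=\alpha$. Moreover, $F_p(k)$ is continuous and strictly decreasing in $p$. Two consequences:
\begin{itemize}
\item for $p<U(k)$ we have $F_p(k)>\alpha$;
\item $U(k)$ is strictly increasing in $k$.
\end{itemize}
For $k=n$ we have $U(n)=1$.

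The second step is the case $k=n$. Suppose $V(n)<1$ and take $p\in(V(n),1]$. Monotonicity gives $V(K)\le V(n)<p$ for every outcome $K$, so the coverage at $p$ is $0<1-\alpha$, a contradiction. Hence $V(n)=1=U(n)$.

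The third step, which is the main one, is the case $k<n$. Suppose $V(k)<U(k)$ and pick $p$ with $V(k)<p<U(k)$. Because $V$ is non-decreasing, every outcome $K\le k$ satisfies $V(K)\le V(k)<p$, so the bound fails at $p$. This gives
\[
\Prob_p(p\le V(K))\le 1-F_p(k)<1-\alpha,
\]
where the strict inequality comes from the first step, since $p<U(k)$. This violates the coverage requirement at $p$. So $V(k)\ge U(k)$ for all $k$, which proves minimality.

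For completeness I will also check that $U$ itself belongs to the admissible class. It is non-decreasing by the first step. Its coverage is already guaranteed by Theorem~\ref{thm:pointwise_finite_sample_safety_certificate}.

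The only delicate points are the strict monotonicity and continuity of $F_p(k)$ in $p$, together with the Beta--binomial identity. For the monotonicity I would use
\[
\frac{d}{dp}F_p(k)=-n\binom{n-1}{k}p^k(1-p)^{n-k-1}<0 \quad\text{on } (0,1).
\]
The identity then follows by integrating this derivative, and the same computation identifies $U(k)$ exactly as the Beta quantile in \eqref{eq:cp_upper}.
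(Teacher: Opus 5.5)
Your proof is correct and complete. The paper gives no argument of its own here; it simply invokes \cite[Theorem~2]{wang2006smallest}. Your pointwise contradiction is therefore a self-contained replacement for that citation rather than a competing route. The key step is taking $p$ strictly between $V(k)$ and $U(k)$: monotonicity then forces the bound to fail on $\{K\le k\}$, an event of probability $F_p(k)>\alpha$.

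Your version also makes two things explicit that the citation leaves implicit. First, ``coverage'' in the definition of the admissible class must hold at every $p\in[0,1]$, not only at the fixed unknown $p_\pi$ of Theorem~\ref{thm:pointwise_finite_sample_safety_certificate}, because you evaluate coverage at an auxiliary $p$. Second, monotonicity of $V$ is exactly what lets the single value $V(k)$ control all outcomes $K\le k$.

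Two cosmetic points, neither of which affects the conclusion $V\ge U$ pointwise:
\begin{itemize}
\item In the case $k=n$ you only get $V(n)\ge 1$, unless bounds are restricted to $[0,1]$.
\item If $V(k)$ may be negative, choose $p\in(\max\{V(k),0\},U(k))$. This interval is nonempty because $U(k)>0$ for $k<n$.
\end{itemize}
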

It is also interesting to note that the two-sided CP bound can be very conservative, but our safety certificate only requires the one-sided CP upper bound on the unsafe operation probability, which, in contrast, turns out to be the tightest (Remark 1 of \cite{wang2006smallest}).

Operators often need an accept/reject decision relative to a prescribed risk tolerance $\varepsilon$. The desired assertion is
\begin{equation}
    p_\pi\leq \varepsilon,
    \label{eq:operator_assertion}
\end{equation}
This assertion can be made with the following statistical guarantee. 
\begin{assertionprocess}\label{proc:assertion}
Given a realized calibration dataset $D_n$, if $U(K_n(D_n);n,\alpha)\leq \varepsilon$, one can assert that $p_{\pi}\leq \varepsilon$ at confidence level $1-\alpha$.
\end{assertionprocess}
The confidence level has the following formal interpretation.
\begin{corollary}\label{cor:false_cert}
For any fixed $\pi$, if $p_\pi>\varepsilon$, then
\begin{equation}
    \Prob_{D_n\sim P_0^n}\left(U(K_n(D_n);n,\alpha)\leq\varepsilon\right)\leq\alpha .
    \label{eq:false_certification}
\end{equation}
\end{corollary}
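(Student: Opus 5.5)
The corollary follows from Theorem~\ref{thm:pointwise_finite_sample_safety_certificate} by an event inclusion. Fix $\pi$ and suppose $p_\pi>\varepsilon$. Consider the event
\begin{equation*}
A\coloneqq\{U(K_n(D_n);n,\alpha)\leq\varepsilon\}.
\end{equation*}
On $A$ we have $U(K_n(D_n);n,\alpha)\leq\varepsilon<p_\pi$, so the coverage event $\{p_\pi\leq U(K_n(D_n);n,\alpha)\}$ fails. Hence $A$ is contained in the complement of the coverage event. Monotonicity of probability under $P_0^n$ and Theorem~\ref{thm:pointwise_finite_sample_safety_certificate} then give
\begin{equation*}
\Prob_{D_n\sim P_0^n}(A)\leq 1-\Prob_{D_n\sim P_0^n}\bigl(p_\pi\leq U(K_n(D_n);n,\alpha)\bigr)\leq\alpha,
\end{equation*}
which is \eqref{eq:false_certification}.

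Two technical points need to be stated explicitly. First, $A$ must be a measurable event in $D_n$. This holds because $U(\cdot;n,\alpha)$ is a function of the discrete count $K_n(D_n)$ alone, and $K_n$ is a finite sum of the indicators $L_\nu(\pi,W^j)$. I would assume these indicators are measurable, which is implicit in the definition of $p_\pi$. Second, the strict inequality $p_\pi>\varepsilon$ is needed. If $p_\pi=\varepsilon$, the event $U=\varepsilon$ is compatible with coverage, and the inclusion above would break.

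No step here is genuinely hard; the only care needed is in quoting Theorem~\ref{thm:pointwise_finite_sample_safety_certificate}. Its probability is taken over the random calibration set for a fixed $\pi$. The argument is therefore valid only if $\pi$ does not depend on $D_n$, which is why the statement says ``for any fixed $\pi$.'' I would add a remark that the inclusion needs no property of the Clopper--Pearson form beyond its coverage. So the same argument shows that Assertion process~\ref{proc:assertion} has false-certification probability at most $\alpha$, uniformly over all $P_0$ with $p_\pi>\varepsilon$.
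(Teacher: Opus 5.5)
Your proof is correct and is the same argument as the paper's. The paper also uses $p_\pi>\varepsilon$ to get the inclusion $\{U(K_n(D_n);n,\alpha)\leq\varepsilon\}\subseteq\{U(K_n(D_n);n,\alpha)<p_\pi\}$, which is the complement of the coverage event, and then applies Theorem~\ref{thm:pointwise_finite_sample_safety_certificate}.
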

\begin{proof}
Since $p_\pi>\varepsilon$,
\[
\left\{U(K_n(D_n);n,\alpha)\leq\varepsilon\right\}
\subseteq
\left\{U(K_n(D_n);n,\alpha)<p_\pi\right\}.
\]
Then by Theorem~\ref{thm:pointwise_finite_sample_safety_certificate}, $\Prob_{D_n\sim P_0^n}
\left(U(K_n(D_n);n,\alpha)\leq\varepsilon\right)
\leq \alpha.$
\end{proof}
Note that neither $p_\pi$ nor $\varepsilon$ is random. The confidence level refers to the data-driven procedure that decides whether to issue the assertion.
The risk tolerance should also be specified before inspecting the calibration outcomes or safety certification map to avoid post-hoc selection ($p$-hacking).

Finally, the proposed verification upper bound can converge to the true probability:
\begin{theorem}
\label{thm:cp_asymptotic_consistency}
For a fixed AI system $\pi$ and a confidence parameter $\alpha\in(0,1)$, under
Assumption~\ref{ass:stated_calibration_distribution}, we have
\begin{equation}
    U(K_n(D_n);n,\alpha)
    \xrightarrow[n\to\infty]{\mathrm{a.s.}} p_\pi .
    \label{eq:cp_upper_asymptotic_consistency}
\end{equation}
\end{theorem}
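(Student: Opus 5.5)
Since the empirical frequency $\hat p_n = K_n(D_n)/n$ converges almost surely to $p_\pi$ by the strong law of large numbers, I would prove Theorem~\ref{thm:cp_asymptotic_consistency} by showing that $U(K_n(D_n);n,\alpha)-\hat p_n\to 0$ \emph{deterministically and uniformly in $k$} as $n\to\infty$. Concretely, under Assumption~\ref{ass:stated_calibration_distribution} the outcomes $L_\nu(\pi,W^j)$ are i.i.d.\ Bernoulli$(p_\pi)$, so $\hat p_n\to p_\pi$ a.s. It then suffices to establish a bound of the form
\begin{equation*}
0\le U(k;n,\alpha)-\tfrac{k}{n}\le \delta_n \quad\text{for all } k\in\{0,\ldots,n-1\},\qquad \delta_n\to 0,
\end{equation*}
while the case $k=n$, where $U=1$, is handled directly.

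For the uniform bound, I would use the duality between the Beta quantile and the binomial tail. For $k<n$, $u=U(k;n,\alpha)$ is the unique $u$ with $\Prob(\Bin(n,u)\le k)=\alpha$. Now suppose $u\ge k/n+t$. Then Hoeffding's inequality gives
\begin{equation*}
\alpha=\Prob(\Bin(n,u)\le k)\le \exp(-2nt^2).
\end{equation*}
Hence $U(k;n,\alpha)-k/n\le \delta_n:=\sqrt{\log(1/\alpha)/(2n)}$ for every $k<n$. The lower inequality $U\ge k/n$ follows from the same duality: at $u=k/n$ the probability $\Prob(\Bin(n,k/n)\le k)$ is at least about $1/2$, which is $\ge\alpha$ whenever $\alpha<1/2$. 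In fact a lower bound is not even needed for the final argument, because I can instead sandwich $U$ using monotonicity in $u$. If $\alpha\ge 1/2$ the lower side could fail marginally, so I would rather obtain the lower deviation also from Hoeffding: if $u\le k/n-t$, then $\Prob(\Bin(n,u)\le k)\ge 1-\exp(-2nt^2)$, which exceeds $\alpha$ once $t>\sqrt{\log(1/(1-\alpha))/(2n)}$. Together these give $|U(k;n,\alpha)-k/n|\le \delta_n'\to0$ uniformly over $k<n$.

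For $k=n$, I would note that $|1-\hat p_n|=0$, so the same bound holds trivially. Combining the pieces,
\begin{equation*}
|U(K_n(D_n);n,\alpha)-p_\pi|\le \delta_n'+|\hat p_n-p_\pi|\to 0 \quad\text{a.s.}
\end{equation*}

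The main obstacle is the careful use of the Beta–binomial duality. One must verify that $u\mapsto\Prob(\Bin(n,u)\le k)$ is continuous and strictly decreasing on $(0,1)$ for $k<n$, so that $U$ is the unique root. The Hoeffding implications above then follow by monotonicity: $u\ge k/n+t$ forces $\alpha\le\Prob(\Bin(n,k/n+t)\le k)$. I would cite the standard identity $\Prob(\mathrm{Beta}(k+1,n-k)\le u)=\Prob(\Bin(n,u)\ge k+1)$ to justify this step.
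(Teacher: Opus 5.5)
Your proof is correct, and it reaches the key deterministic bound $\max_{0\le k\le n}|U(k;n,\alpha)-k/n|\to 0$ by a different route than the paper. The paper's proof stays on the Beta side. It uses the mean $(k+1)/(n+1)$ and variance (at most $1/(4(n+2))$) of $\mathrm{Beta}(k+1,n-k)$, and applies Chebyshev's inequality to place the $(1-\alpha)$-quantile within $c_\alpha/(2\sqrt{n+2})$ of that mean, where $c_\alpha=\max\{\alpha^{-1/2},(1-\alpha)^{-1/2}\}$. It then absorbs the $1/(n+1)$ gap between the Beta mean and $k/n$. The strong law and the triangle inequality finish the proof exactly as in your argument.

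You instead pass to the binomial side through $\Prob(\mathrm{Beta}(k+1,n-k)\le u)=\Prob(\Bin(n,u)\ge k+1)$, which gives $\Prob(\Bin(n,U)\le k)=\alpha$, and you apply Hoeffding's inequality in both directions. The paper's route is more self-contained: it needs only the Beta-quantile definition of $U$ and two moments. Yours needs the Beta--binomial identity but gives a sharper and more interpretable constant. The bound $U(k;n,\alpha)-k/n\le\sqrt{\log(1/\alpha)/(2n)}$ grows only logarithmically in $1/\alpha$, compared with $\alpha^{-1/2}$ from Chebyshev. This matches the paper's later observation that very high confidence is cheap in calibration samples. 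It also bounds how conservative the certificate can be relative to $\widehat{p}_n$, in the form of a familiar Hoeffding margin.

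Two small clean-ups. First, the one-sided target $0\le U(k;n,\alpha)-k/n$ in your first display is false for large $\alpha$: with $n=2$, $k=1$, $\alpha=0.99$ you get $U=0.1<1/2$. State the two-sided bound from the start and drop the median detour. Second, you do not need strict monotonicity or uniqueness of the root. The identity already gives $\Prob(\Bin(n,U)\le k)=\alpha$ at $U$ itself, and Hoeffding is applied at that point.
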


\begin{proof}
Let $\widehat{p}_n \coloneqq  \frac{K_n(D_n)}{n}$ be the empirical unsafe operation frequency. We first show that the CP upper bound is asymptotically close to $\widehat{p}_n$.

For a realized count $k<n$, let
\begin{equation*}
    B_{n,k}\sim \mathrm{Beta}(k+1,n-k).
\end{equation*}
By the definition of the one-sided CP upper bound in~\eqref{eq:cp_upper},
\begin{equation*}
    U(k;n,\alpha)=Q_{1-\alpha}(B_{n,k}),
\end{equation*}
where $Q_{1-\alpha}(B_{n,k})$ denotes the $(1-\alpha)$-quantile of
$B_{n,k}$. The mean and variance of $B_{n,k}$ are
\begin{align*}
    m_{n,k}
    &\coloneqq  \mathbb{E}[B_{n,k}]
      = \frac{k+1}{n+1},\\
    \sigma_{n,k}^2
    &\coloneqq  \mathrm{Var}(B_{n,k})
      = \frac{(k+1)(n-k)}{(n+1)^2(n+2)}
      \leq \frac{1}{4(n+2)} .
\end{align*}
By Chebyshev's inequality,
\begin{equation*}
    Q_{1-\alpha}(B_{n,k})
    \leq
    m_{n,k}
    +
    \frac{\sigma_{n,k}}{\sqrt{\alpha}},
\end{equation*}
and similarly,
\begin{equation*}
    Q_{1-\alpha}(B_{n,k})
    \geq
    m_{n,k}
    -
    \frac{\sigma_{n,k}}{\sqrt{1-\alpha}} .
\end{equation*}
Therefore,
\begin{equation}
    \left|
        U(k;n,\alpha)-m_{n,k}
    \right|
    \leq
    \frac{c_\alpha}{2\sqrt{n+2}},
    \label{eq:cp_beta_quantile_bound}
\end{equation}
where $c_\alpha \coloneqq \max\left\{\alpha^{-1/2},(1-\alpha)^{-1/2}\right\}$. Moreover,
\begin{equation}
    \left|
        m_{n,k}-\frac{k}{n}
    \right|
    =
    \frac{n-k}{n(n+1)}
    \leq
    \frac{1}{n+1}.
    \label{eq:beta_mean_empirical_gap}
\end{equation}
Combining \eqref{eq:cp_beta_quantile_bound} and
\eqref{eq:beta_mean_empirical_gap}, we obtain, for all $k<n$,
\begin{equation}
    \left|
        U(k;n,\alpha)-\frac{k}{n}
    \right|
    \leq
    \frac{1}{n+1}
    +
    \frac{c_\alpha}{2\sqrt{n+2}} .
    \label{eq:uniform_cp_empirical_gap}
\end{equation}
If $k=n$, then $U(n;n,\alpha)=1=k/n$, so
\eqref{eq:uniform_cp_empirical_gap} also holds. Hence, we can conclude that
\begin{equation}
    \left|
        U(K_n(D_n);n,\alpha)-\widehat{p}_n
    \right|
    \leq
    \dfrac{1}{n+1}
    +
    \dfrac{c_\alpha}{2\sqrt{n+2}}
    \xrightarrow[n\to\infty]{}0.
    \label{eq:cp_close_to_empirical}
\end{equation}

Under Assumption~\ref{ass:stated_calibration_distribution}, the variables $L_\nu(\pi,W^j)$ are i.i.d. Bernoulli random variables with mean $p_\pi$. By the strong law of large numbers,
\begin{equation}
    \widehat{p}_n
    =
    \frac{1}{n}\sum_{j=1}^n L_\nu(\pi,W^j)
    \xrightarrow[n\to\infty]{\mathrm{a.s.}} p_\pi .
    \label{eq:slln_empirical_probability}
\end{equation}
The triangle inequality, together with
\eqref{eq:cp_close_to_empirical} and \eqref{eq:slln_empirical_probability},
gives
\begin{equation*}
    U(K_n(D_n);n,\alpha)
    \xrightarrow[n\to\infty]{\mathrm{a.s.}} p_\pi .
\end{equation*}
\end{proof}

\section{Sample-Space Robust Certification}
\label{sec:sample_space_robust_certification}

The nominal certificate characterizes the unsafe operation probability with a finite-sample guarantee, but the operator may find such a guarantee insufficient for several reasons:
\begin{enumerate}
    \item temporal distribution shifts, such as evolving weather patterns, may make future operation deviate from the calibration distribution;
    \item the simulator-based evaluation tool ($\calH$ in Eq.~\eqref{eq:threshold_safety_process}) introduces bias, because the physical parameters used to model the power grid may deviate from the real system (and all models are wrong);
    \item if the calibration data are collected in the real world, inherent temporal dependence may break the i.i.d. assumption;
    \item due to the lack of interpretability of AI, the operator may want stronger stress tests on the model.
\end{enumerate}
To handle these issues, it is useful to enhance the safety guarantee with an additional robustness layer. Although one may argue that system operators may already have prepared a set of ``worst-case'' scenarios for testing, the actual worst case for an AI system can be highly model-specific because of the nonlinearity of neural networks, especially when the system is large.
Existing work has investigated distributionally robust certification, where the certificate is required to hold for a set of plausible distributions \cite{zhao2025distributionally}. However, such methods can have reduced interpretability, because the distributional distance that controls the range of plausible distributions is often difficult to physically interpret. These methods may also involve complex sampling approaches. To tackle this issue, we borrow the idea of adversarial attacks in AI. More specifically, given a perturbation budget $\rho$, for each realization $W^j$ in the calibration set, we want to find a neighborhood scenario that leads to the most unsafe outcome, namely
\begin{equation}
    \widetilde W_\rho^j
    \in
    \operatorname*{arg\,max}_{\omega\in\calW}
    M(\pi,\omega)
    \quad
    \mathrm{s.t.}\quad
    \|\omega-W^j\|\leq\rho .
    \label{eq:adversarial_neighborhood}
\end{equation}
Here $\|\cdot\|$ is a general norm or distance function. It can be selected so that the perturbation has a clear physical interpretation: for example, solar generation may deviate by at most 10\%, or the network may experience at most one or $k$ contingencies. This gives the perturbation budget an engineering meaning that system operators can understand.
The optimization problem \eqref{eq:adversarial_neighborhood} has the form of an adversarial attack. It is hard to solve because of the complex AI system and grid physics involved. However, a large body of work on adversarial-attack methods provides several approximation options that work well empirically \cite{ren2020adversarial}.

\begin{enumerate}
    \item Gradient-based optimisation can be used if the whole chain in Eq.~\eqref{eq:threshold_safety_process} is differentiable. One effective and computationally tractable method is projected gradient descent (PGD), which finds the neighbourhood by iteratively performing
\begin{equation}
    \omega_{t+1}^j
    =
    \Pi_{\rho,j}
    \left[
    \omega_t^j
    +
    \eta\,\operatorname{sign}
    \left(\nabla_{\omega}L_\nu(\pi,\omega_t^j)\right)
    \right],
    \label{eq:pgd_update}
\end{equation}
where $\omega_0^j=W^j$, $\eta$ is the step size, $\Pi_{\rho,j}$ denotes projection onto $\{\omega\in\calW:\|\omega-W^j\|\leq\rho\}$, and the final iterate is used as the adversarial candidate $\widetilde W_\rho^j$, i.e., the approximate solution to \eqref{eq:adversarial_neighborhood}.

    \item Heuristic methods or RL, such as policy-gradient search, can be used if the whole chain in Eq.~\eqref{eq:threshold_safety_process} is non-differentiable. Random perturbation is also a simple special case.
\end{enumerate}
The core difference between our problem in \eqref{eq:adversarial_neighborhood} and a standard AI adversarial attack is the additional step that translates the AI system output into a power grid safety score, typically through a simulator, as in Eq.~\eqref{eq:threshold_safety_process}. If the simulator uses AC power flow, the gradient can still be calculated through implicit differentiation \cite{zhou2026gradmap}. For the general case in which this step is non-differentiable, one can use a sample-based method to estimate gradients, such as the policy-gradient method in RL.

In addition, regardless of the method used, one must check whether the calculated neighbourhood scenario is less safe than the original realization $W^j$; otherwise, the original scenario should be retained. Thus, for an exact or approximate adversarial candidate $\widetilde W_\rho^j$, we define the retained scenario as
\begin{equation}
    \widehat W_\rho^j
    =
    \begin{cases}
    \widetilde W_\rho^j, & L_\nu(\pi,\widetilde W_\rho^j)\geq L_\nu(\pi,W^j),\\
    W^j, & \text{otherwise}.
    \end{cases}
    \label{eq:adversarial_retained_sample}
\end{equation}

After this process, we create an adversarial calibration set
\begin{equation}
    \widehat D_{n,\rho}\coloneqq (\widehat W_\rho^1,\ldots,\widehat W_\rho^n),
    \label{eq:adversarial_calibration_set}
\end{equation}
after which the same process in Section~\ref{sec:finite_sample_safety_certificates} can be applied to obtain the corresponding upper bound on the unsafe operation probability. Theorem~\ref{thm:adversarial_cp_dominance} shows that this upper bound is no smaller than the nominal CP upper bound.
\begin{theorem}
\label{thm:adversarial_cp_dominance}
For any realized calibration dataset $D_n$, perturbation budget $\rho\geq0$, and retained adversarial set $\widehat D_{n,\rho}$ defined by \eqref{eq:adversarial_retained_sample}--\eqref{eq:adversarial_calibration_set}, we have that
\begin{equation}
    U(\widehat{K}_{n,\rho}(\widehat D_{n,\rho});n,\alpha)
    \geq
    U(K_n(D_n);n,\alpha).
    \label{eq:adversarial_cp_dominance}
\end{equation}
\end{theorem}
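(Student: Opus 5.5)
The argument has two steps: show that the retention rule can only raise the unsafe-outcome count, then show that the one-sided Clopper--Pearson bound in \eqref{eq:cp_upper} is non-decreasing in the count for fixed $n$ and $\alpha$.

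\textbf{Step 1: the count does not decrease.} I will prove the per-sample inequality $L_\nu(\pi,\widehat W_\rho^j)\geq L_\nu(\pi,W^j)$ for every $j$, directly from \eqref{eq:adversarial_retained_sample}.
\begin{itemize}
\item If the adversarial candidate is retained, the retention condition $L_\nu(\pi,\widetilde W_\rho^j)\geq L_\nu(\pi,W^j)$ is exactly this inequality.
\item Otherwise $\widehat W_\rho^j=W^j$, and the inequality holds with equality.
\end{itemize}
Summing over $j$ as in \eqref{eq:violation_count} gives
\[
\widehat K_{n,\rho}(\widehat D_{n,\rho})=\sum_{j=1}^n L_\nu(\pi,\widehat W_\rho^j)\geq K_n(D_n).
\]
Both counts use the same $n$, because the adversarial set has exactly $n$ entries. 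This step is deterministic, so it holds for every realized $D_n$ and every $\rho\geq 0$. It does not matter whether the solution of \eqref{eq:adversarial_neighborhood} is exact or only approximate.

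\textbf{Step 2: the bound is monotone in the count.} This is the main step. I will show that $k\mapsto U(k;n,\alpha)$ is non-decreasing on $\{0,\ldots,n\}$.
\begin{itemize}
\item \emph{Case $k'=n$.} Then $U(n;n,\alpha)=1$, and every value of $U$ is a quantile of a distribution on $[0,1]$, so $U(k;n,\alpha)\leq 1$ for all $k$.
\item \emph{Case $k<k'<n$.} I will use the standard identity between the Beta CDF and the binomial tail,
\[
F_{\mathrm{Beta}(k+1,n-k)}(p)=\Prob(\Bin(n,p)\geq k+1).
\]
For fixed $p$, the right-hand side is non-increasing in $k$. So the Beta$(k+1,n-k)$ distribution is stochastically increasing in $k$. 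Quantiles preserve first-order stochastic dominance, which gives $\BetaInv(1-\alpha;k+1,n-k)\leq \BetaInv(1-\alpha;k'+1,n-k')$.
\end{itemize}

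\textbf{Alternative for Step 2.} The preceding tightness theorem, citing \cite{wang2006smallest}, already treats $U$ as a non-decreasing upper confidence bound. I could invoke that, but I would still include the short stochastic-ordering argument above so the proof is self-contained.

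\textbf{Conclusion.} Applying Step 2 to the inequality from Step 1 yields \eqref{eq:adversarial_cp_dominance}.
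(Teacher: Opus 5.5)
Your proposal is correct and follows the paper's proof. It uses the same per-sample dominance $L_\nu(\pi,\widehat W_\rho^j)\geq L_\nu(\pi,W^j)$ from the retention rule, sums it to obtain $\widehat K_{n,\rho}\geq K_n$, and then appeals to monotonicity of $U(\cdot;n,\alpha)$ in the count. The one difference is that the paper simply asserts this monotonicity, whereas your Beta--binomial tail identity and stochastic-ordering argument, together with the separate $k'=n$ case, give a valid self-contained justification of it.
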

\begin{proof}
By \eqref{eq:adversarial_retained_sample}, each retained adversarial scenario satisfies
\begin{equation}
    L_\nu(\pi,\widehat W_\rho^j)\geq L_\nu(\pi,W^j).
    \label{eq:retained_sample_dominance}
\end{equation}
The summation of~\eqref{eq:retained_sample_dominance} over $j=1,\ldots,n$ gives $\widehat{K}_{n,\rho}(\widehat D_{n,\rho})\geq K_n(D_n)$. Since the one-sided CP upper bound $U(k;n,\alpha)$ is non-decreasing in the observed unsafe-outcome count $k$, we can conclude the inequality~\eqref{eq:adversarial_cp_dominance}.
\end{proof}

\section{Case Studies}
\label{sec:case_studies}

\begin{figure}[tb]
    \centering
    \vspace{-6mm}
    \includegraphics[width=1\columnwidth]{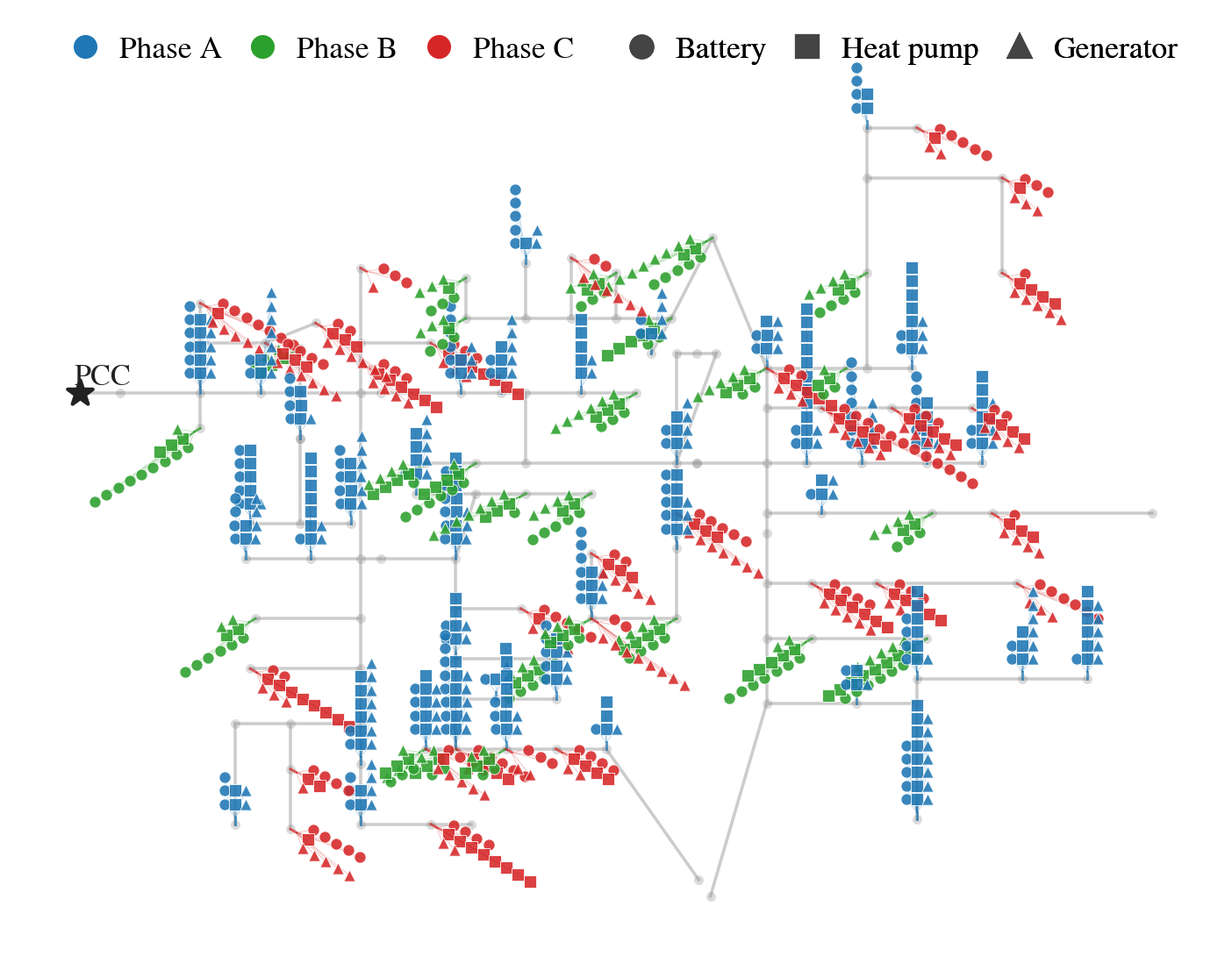}
    \vspace{-12mm}
    \caption{IEEE 123-bus feeder topology used in the two case studies, together with the connection locations of the $1{,}000$ agents. Gray nodes denote feeder buses, coloured branches indicate occupied phase connections, and marker types distinguish the $334$ battery, $333$ heat-pump, and $333$ generator agents distributed across the network.}
    \label{fig:feeders}
\end{figure}

We present case studies based on the IEEE 123-bus medium-voltage feeder \cite{ieee_feeder} with 1{,}000 agents. Each agent manages a single energy device, such as a battery, heat pump, or generator. The device models follow \cite{zhou2026gradmap}. The training objective of the multi-agent AI system is to minimise each home's energy bill under a time-varying tariff while minimising distribution network voltage violations. The control time step is 15 minutes. At this time scale, safety is primarily determined by the distribution network voltage magnitude calculated by solving the three-phase unbalanced AC load flow, as defined in Eq.~\eqref{eq:voltage_safety_outcome}.

All decentralised policies use the same 8-dimensional local observation at the \emph{current} time step, defined as
\begin{equation}
    \vect{o}_{i,t}
    =
    [
    \bar{t},\,
    \bar{x}_{i,t},\,
    \bar{d}_{i,t}^{\mathrm{net}},\,
    \bar{p}_t^{\mathrm{imp}},\,
    \bar{p}_t^{\mathrm{exp}},\,
    \bar{\theta}_t^{\mathrm{out}},\,
    \bar{u}_{i,t},\,
    \bar{v}_{i,t}^{\mathrm{loc}}
    ]^{\top},
    \label{eq:local_obs}
\end{equation}
where the entries are the normalised time $\bar{t}\coloneqq t/T$, device state $\bar{x}_{i,t}$ (energy level for batteries, temperature for heat pumps, and power at the previous time step for generators), net uncontrollable demand $\bar{d}_{i,t}^{\mathrm{net}}$ (uncontrollable load minus solar generation), import and export prices $\bar{p}_t^{\mathrm{imp}}$ and $\bar{p}_t^{\mathrm{exp}}$, outdoor temperature $\bar{\theta}_t^{\mathrm{out}}$, terminal-state urgency $\bar{u}_{i,t}$, and local voltage $\bar{v}_{i,t}^{\mathrm{loc}}$. All features are normalised to the interval $[0,1]$ using either the device parameters or the training data statistics. The terminal-state urgency feature $\bar{u}_{i,t}$ was found to be helpful for satisfying terminal-state constraints. It is calculated as the remaining difference required to satisfy a terminal-state constraint, such as the end-of-horizon SoC, divided by the remaining number of time steps.

Fig.~\ref{fig:feeders} presents the IEEE 123-bus network together with the connection points of the 1{,}000 agents used in the main study. For training and testing, three-phase unbalanced AC power flow is solved using the Z-Bus method \cite{zbus}. Different AI and non-AI algorithms are evaluated for safety, including the recent scalable gradient-based AI learning method GradMAP, which outperforms standard multi-agent RL algorithms \cite{zhou2026gradmap}, the multi-agent extension of the self-supervised learning approach GradMA \cite{zhou2026gradmap, park2024self}, the standard multi-agent RL algorithm independent proximal policy optimisation (IPPO), and a simple non-AI rule-based algorithm (SuperNaive). Under  the SuperNaive algorithm the
batteries remain idle, the heat pumps always maintain the
indoor temperature at the reference level, and the generators always operate at 100\% power. 
We test IPPO instead of the better-performing MAPPO because MAPPO's memory consumption exceeds our hardware limit. We also note that the focus of this paper is to verify the safety of AI methods, so performance is a less important dimension. For the AI methods, each agent is a neural network with a single hidden layer of 16 neurons (two layers in total) and tanh activation, which performs better empirically than deeper or wider fully connected networks. Although each network is small, there are 1{,}000 independent agents in total with no parameter sharing, and we consider the exact nonlinear, non-convex, three-phase unbalanced network constraints that couple these agents. 

To better evaluate the proposed safety verification method, we carried out two case studies. The first is a case in which Assumption~\ref{ass:stated_calibration_distribution} is satisfied. This verifies the stated behaviours of the proposed method when the required conditions are met. To achieve these conditions, we use the HEDGE tool \cite{charbonnier2024hedge}, a generative machine learning model trained on real-world residential data, to generate residential demand and solar data for training the AI agents, AI verification, and out-of-sample safety testing. Fig.~\ref{fig:hedge_profile_overview} visualises the generated demand and solar profiles. We also create a synthetic temperature profile for heat pumps. We use HEDGE to generate mutually independent daily demand and solar trajectories. The training split contains 365 daily trajectories, the calibration split contains 10{,}000 daily trajectories, and the testing split contains 10{,}000 daily trajectories. Fig.~\ref{fig:dispatch_visualization} shows that the multi-agent AI system learns reasonable behaviour that responds to the price signal while maintaining the voltage limit for one day. For this case study, the safety outcome is the per-day maximum voltage deviation over all 96 15-minute control steps. Thus, $W$ here refers to one complete daily trajectory, including the daily demand, renewables, temperature and price signals, together with the relevant closed-loop network and device-state information. 

The second case study represents a more realistic setup for testing the empirical performance of the proposed method and highlights the importance of the proposed adversarial attack method for improving its practical usefulness. This is a rolling-window test in which, for each window, we use the most recent six months of data. The first five months are used for training, the sixth month is used for AI verification, and the verification result is evaluated against out-of-sample testing outcomes in the next month. The verification and testing sets use the raw daily trajectories from the corresponding months, so the sample sizes are the numbers of days in those months. No replacement resampling is used. In this setting, the temporal correlation of the time-series data remains, so Assumption~\ref{ass:stated_calibration_distribution} does not hold. However, this is a setting in which a practitioner might directly apply our statistical verification technique. This case study uses only real-world time-series data. The household load data come from~\cite{wardle2020dataset}, the solar-generation and air-temperature traces come from~\cite{renewable_ninja}, and the import/export prices come from the UK Octopus Energy Agile tariff, which varies at every time step and on every day~\cite{octopus_agile_historical_data}, as illustrated in Fig.~\ref{fig:real_profile_overview}.

The implementation is written in JAX 0.4.38 and Python 3.10.19, and all experiments were run on a single NVIDIA RTX PRO 5000 Blackwell 48-GB GPU.

\subsection{Case Study I: Nominal Certification}

Fig.~\ref{fig:nominal_certification} shows the certification results using $n=100$ randomly selected daily calibration scenarios under different safety thresholds $\nu$. Another widely used approach to validating model performance is to calculate the empirical unsafe probability on a held-out validation set (the calibration set here), which we also calculate for comparison and denote as the calibration empirical probability. As can be seen, the calibration empirical probability is lower than the out-of-sample unsafe probability for most cases across GradMA, IPPO, and SuperNaive, highlighting that using only the validation-set empirical metric as the performance estimate is less reliable. In contrast, the nominal CP certificate provides a statistically valid upper confidence bound on the unsafe probability under the stated calibration distribution, which highlights the importance of rigorous statistical safety verification rather than simple evaluation on a held-out validation set. However, as the voltage threshold increases, the verification probability cannot decrease below around 0.03 due to the limited amount of calibration data.

\subsection{Case Study I: Sample-Size Effect}

This section investigates whether increasing the amount of calibration data can lead to a tighter verification probability, as stated in Theorem~\ref{thm:cp_asymptotic_consistency}. Fig.~\ref{fig:sample_complexity} shows that, as $n$ increases, despite some variation, the proposed verification probabilities tighten toward the out-of-sample empirical probabilities. This figure also highlights that the standard validation-set empirical metric can be unreliable even with more than 1{,}000 samples. 

\begin{figure}[t]
\centering
\includegraphics[width=\linewidth]{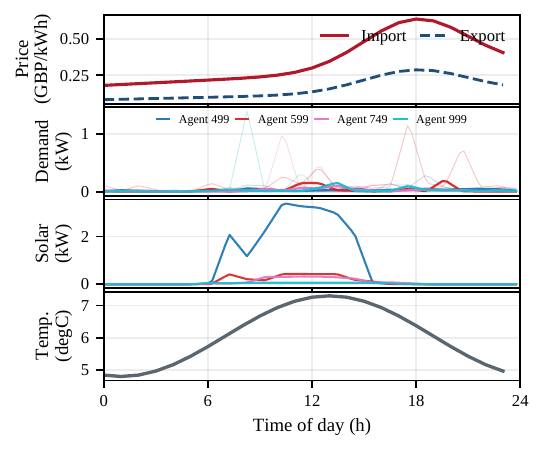}
\vspace{-8mm}
\caption{Synthetic time-series data generated by HEDGE \cite{charbonnier2024hedge} for Case Study I.}
\label{fig:hedge_profile_overview}
\end{figure}

\begin{figure}[t]
\centering
\includegraphics[width=\linewidth]{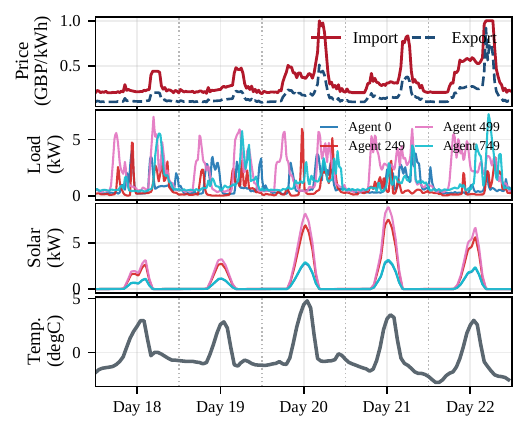}
\vspace{-8mm}
\caption{Real-world time-series data for Case Study II.}
\label{fig:real_profile_overview}
\end{figure}

\begin{figure}[t]
\centering
\includegraphics[width=\linewidth]{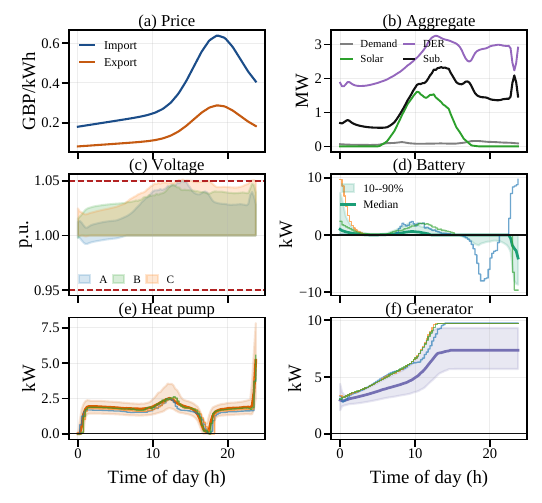}
\vspace{-8mm}
\caption{Aggregated dispatch results of the trained GradMAP multi-agent AI system on a typical held-out day for Case Study I. Dashed red voltage lines mark the 0.95 and 1.05 p.u. limits.}
\label{fig:dispatch_visualization}
\end{figure}

\begin{figure}[t]
\centering
\includegraphics[width=\linewidth]{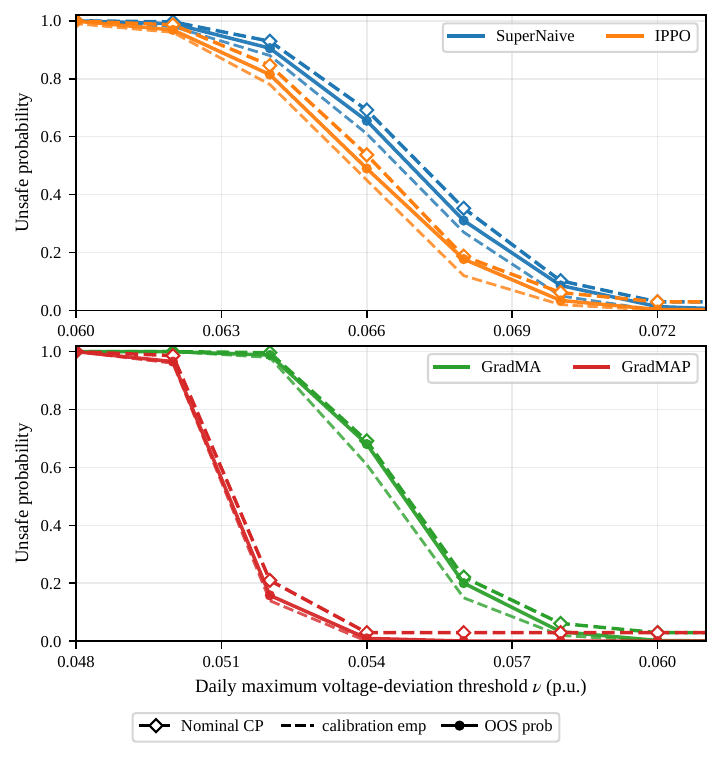}
\vspace{-8mm}
\caption{Per-day maximum voltage safety certification curves for Case Study I with $n=100$ calibration scenarios and $\alpha=0.05$. The upper panel compares SuperNaive and IPPO, while the lower panel compares GradMA and GradMAP. Nominal CP refers to the proposed method in Section \ref{sec:finite_sample_safety_certificates}, calibration emp refers to the frequency of unsafe scenarios evaluated on the calibration set (the commonly used held-out method), and OOS prob refers to the true unsafe probability, which is estimated by evaluating on 10{,}000 out-of-sample scenarios.}
\label{fig:nominal_certification}
\end{figure}

\begin{figure}[t]
\centering
\includegraphics[width=\linewidth]{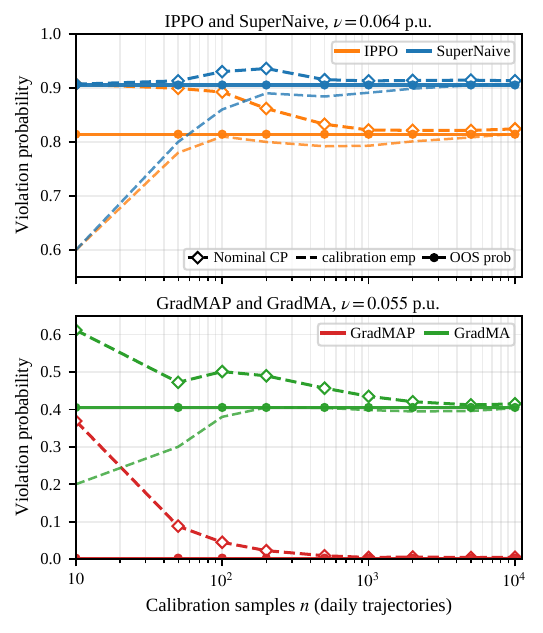}
\vspace{-8mm}
\caption{Effect of calibration sample size with $\alpha=0.01$. Nominal CP refers to the proposed method in Section \ref{sec:finite_sample_safety_certificates}, calibration emp refers to the frequency of unsafe scenarios evaluated on the calibration set (the commonly used held-out method), and OOS prob refers to the true unsafe probability, which is estimated by evaluating on 10,000 out-of-sample scenarios.}
\label{fig:sample_complexity}
\end{figure}

\begin{figure}[t]
\centering
\includegraphics[width=\linewidth]{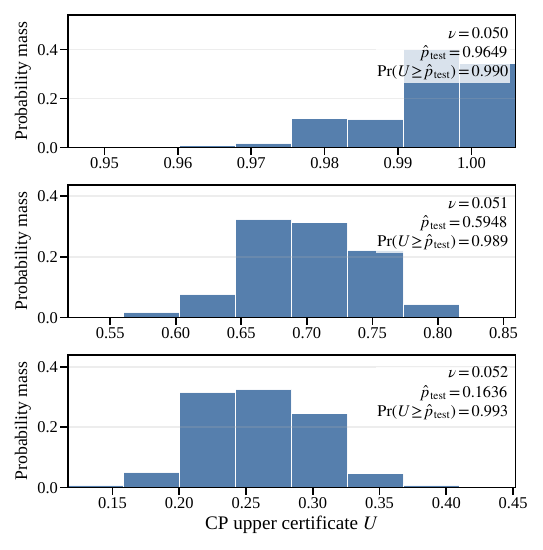}
\vspace{-8mm}
\caption{Monte Carlo distributions of the nominal CP upper certificate $U$ over 1{,}000 calibration repetitions for selected voltage thresholds $\nu$ under $\alpha=0.01$. $\hat{p}_\text{test}$ is the estimated true unsafe probability by evaluating on 10,000 out-of-sample scenarios. $\mathrm{Pr(U\geq \hat{p}_\text{test})}$ represents the proportion of the 1{,}000 calibration scenarios where $U\geq \hat{p}_\text{test}$.}
\label{fig:nominal_cp_audit_u_histograms}
\end{figure}

\subsection{Case Study I: Confidence Level and Statistical Assertion}

We also evaluate whether the stated confidence level $1-\alpha$ is achieved. To do so, we repeat the certification 1{,}000 times to obtain 1{,}000 data-dependent $U(K_n(D_n);n,\alpha)$ for the selected GradMAP AI system, each time using a different HEDGE-generated calibration set. For each of the 1{,}000 calibration sets, we estimate the out-of-sample unsafe probability using a separate independent 10{,}000-day HEDGE-generated test set. Fig.~\ref{fig:nominal_cp_audit_u_histograms} shows the distributions of the 1{,}000 $U$ values for representative voltage thresholds. For each threshold, the proportion of $U$ values that are not less than the out-of-sample unsafe probability matches or exceeds the specified confidence level $1-\alpha$. Also note that as specified by Theorem \ref{thm:cp_asymptotic_consistency} and illustrated in Fig.~\ref{fig:sample_complexity}, the variance of the distribution of $U$ should diminish as the size of calibration data increases.

\subsection{Case Study I: Robust Certificate}

This section visualises robust certificates under different perturbation budgets $\rho$ defined in Eq.~\eqref{eq:adversarial_neighborhood}. We use the $\ell_\infty$ norm, so for budget $\rho$ the adversarial-attack search space is:
\begin{align*}
    \widetilde{d}_{a,t} &\in [(1-\rho)d_{a,t},(1+\rho)d_{a,t}],\\
    \widetilde{r}_{a,t} &\in [(1-\rho)r_{a,t},(1+\rho)r_{a,t}].
\end{align*}
The PGD attack was used to approximate the solution to \eqref{eq:adversarial_neighborhood}.

Fig.~\ref{fig:pgd_stress_map}(a) shows the certificate results under the nominal and robust cases. It is expected that the certified unsafe probability increases as the perturbation budget increases. By Assertion process~\ref{proc:assertion}, the heatmap in Fig.~\ref{fig:pgd_stress_map}(a) can be converted into a more operator-facing binary pass/fail statistical assertion at different unsafe thresholds, as shown in Fig.~\ref{fig:pgd_stress_map}(b)--(d). Such visualisation can provide operators with a comprehensive understanding of AI-system safety under different safety thresholds, confidence levels, and robustness budgets. One additional observation is that increasing the confidence level $1-\alpha$ has the greatest impact when certifying a very low unsafe probability.

\begin{figure*}[t]
\centering
\includegraphics[width=\textwidth]{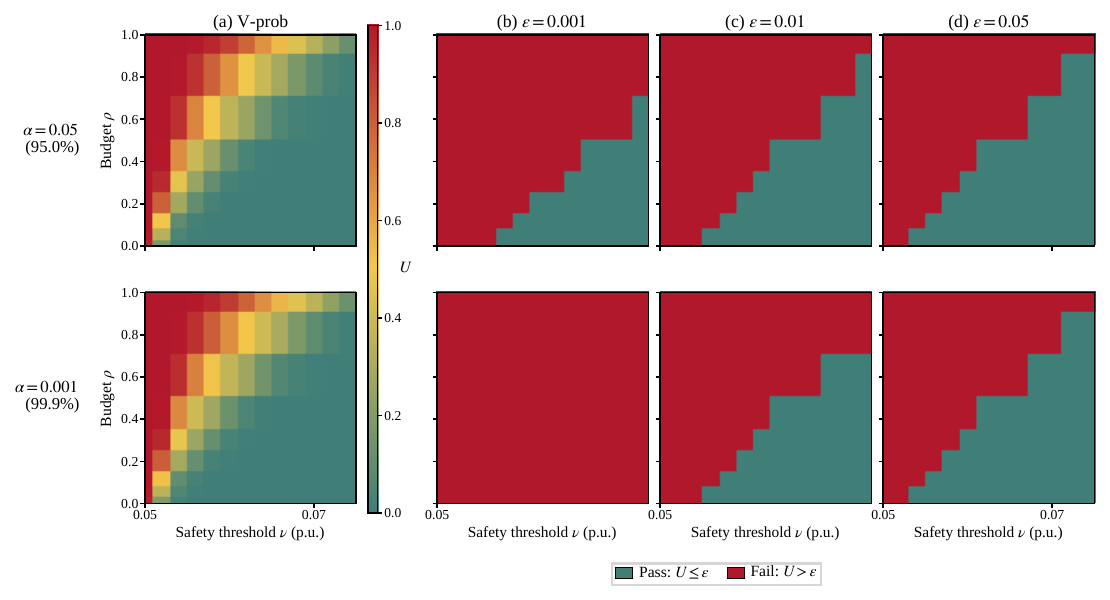}
\vspace{-8mm}
\caption{Safety certification map for Case Study I. Plot (a) shows the CP upper certificate $U$ in the nominal or adversarial robust cases. Plots (b)--(d) apply Assertion process~\ref{proc:assertion} with $\varepsilon=0.001$, 0.01, and 0.05; green cells pass $U\leq\varepsilon$ and red cells fail. The horizontal axis is the per-day maximum voltage-deviation threshold $\nu$ from 0.05 to 0.075 p.u.; the vertical axis is the fractional load/PV perturbation budget $\rho$.}
\label{fig:pgd_stress_map}
\end{figure*}

\subsection{Case Study II}

Fig.~\ref{fig:rolling_certificate_test_gap} shows the more realistic rolling-window certification results. The upper panel uses the daily-trajectory safety outcome based on the per-day maximum voltage deviation at $\nu=0.060$ p.u. The lower panel uses per-time-step voltage deviations at $\nu=0.060$ p.u., sampled every four 15-min control steps to reduce temporal correlation. It can be seen that the nominal CP upper bounds are usually useful, but they are not always sufficient: some rolling windows show that next-month unsafe operation probabilities can exceed the nominal certificate. This is because Assumption~\ref{ass:stated_calibration_distribution} no longer holds in the rolling-window cases. The adversarial CP upper bounds, computed with a $1.5\%$ PGD perturbation budget, enhances the robustness of the certificate in these windows and successfully covers the next-month unsafe probability.

\begin{figure}[t]
\centering
\includegraphics[width=\linewidth]{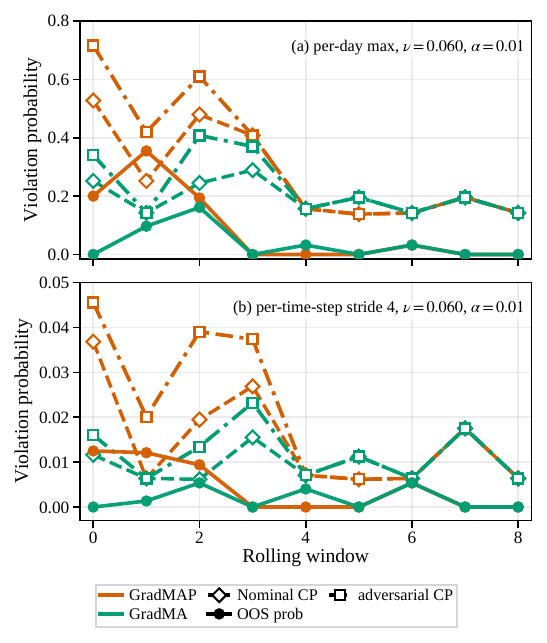}
\vspace{-8mm}
\caption{Rolling-window certification results at $\alpha=0.01$ for GradMAP and GradMA in Case Study II. The upper panel uses the per-day maximum voltage-deviation outcome at $\nu=0.060$ p.u. The lower panel uses per-time-step voltage deviations sampled every four 15-min control steps (to reduce temporal correlation) at $\nu=0.060$ p.u. Nominal CP refers to the proposed method in Section \ref{sec:finite_sample_safety_certificates}, while adversarial CP is the one with extra robustness with $\rho=1.5\%$ in Section \ref{sec:sample_space_robust_certification}. Calibration emp refers to the frequency of unsafe scenarios evaluated on the calibration set (the commonly used held-out method), and OOS prob refers to the true unsafe probability, which is estimated by evaluating on 10,000 out-of-sample scenarios.}
\label{fig:rolling_certificate_test_gap}
\end{figure}

\section{Discussion}
\label{sec:discussion}

The validity of the proposed verification method relies on the key assumption that the calibration dataset is composed of independent samples drawn from the same distribution as the real-world data. A key challenge is that most real-world grid data, including demand and renewable generation data, are time series. This can make the collected data non-independent or cause the data distribution to shift, both of which are shown in our rolling-window case study. However, as discussed, certification in power grids is likely to be performed on a simulator because it is unlikely that an unverified AI system can be deployed in a real system. In this case, it may be possible to recover independence under Assumption~\ref{ass:stated_calibration_distribution} by using a simulated data generator such as HEDGE. There are also other approaches to reduce temporal dependence. One solution, as suggested in \cite{lindemann2025formal}, is to verify trajectory-level properties and collect independent trajectory samples whenever possible, which is the setting of our Case I. Another solution is to sample only a subset of the data to reduce temporal dependence, as in our Case II. Regarding distribution shift, one solution is to use the most recent data for calibration or to handle it through our proposed adversarial attack, as demonstrated in our Case II. Another issue associated with using simulators for verification is the sim-to-real gap. One can model this gap as a probability distribution, with each sampling process adding another layer of sampling to represent the simulator gap.




Also note that calibration data are not used to train, tune, select, or early-stop $\pi$, which ensures that the derived dataset of $L_\nu(\pi,W)$ has the same distribution as that under the future actual grid operating conditions. Therefore, if one wants to use the method as a learning metric for safe AI, they need to have another held-out dataset.


We can analyse the sample requirements for extreme certification. If no unsafe outcomes are observed, we have \(U(0;n,\alpha)=1-\alpha^{1/n}\). Therefore, certifying \(p_\pi\leq\varepsilon\) with zero observed unsafe outcomes requires
\begin{equation}
    n\geq \left\lceil\frac{\log\alpha}{\log(1-\varepsilon)}\right\rceil
    =
    \left\lceil\frac{\log(1/\alpha)}{-\log(1-\varepsilon)}\right\rceil
    \overset{\varepsilon\to 0}{\approx} 
    \left\lceil\frac{\log(1/\alpha)}{\varepsilon}\right\rceil .
    \label{eq:zero_violation_samples}
\end{equation}
Thus, the number of calibration samples required to certify an extremely low unsafe probability grows at a rate of approximately \(1/\varepsilon\). For \(\alpha=0.05\), this gives approximately 299, 2,995, and 29,956 samples for \(\varepsilon=10^{-2}\), \(10^{-3}\), and \(10^{-4}\), respectively.

We can also quantify the number of calibration samples required for extremely high-confidence certification. For fixed \(\varepsilon\), Eq.~\eqref{eq:zero_violation_samples} shows that this number grows only logarithmically with \(1/\alpha\). For example, when \(\varepsilon=10^{-3}\), increasing the confidence level from \(95\%\) (\(\alpha=0.05\)) to \(99.9999\%\) (\(\alpha=10^{-6}\)) increases the required sample size from \(2,995\) to \(13,809\). Therefore, achieving higher confidence is comparatively easier than certifying an extremely low unsafe probability. This property is particularly advantageous in power system applications, where grid operators may prefer a highly reliable certificate over a more stringent safety-probability claim with a greater risk of false certification.






\section{Conclusion}
\label{sec:conclusion}

This paper develops a finite-sample probabilistic safety certification framework for black-box multi-agent AI systems for power grid operation. The computational complexity remains of the same order as that of commonly used held-out evaluation, making the method scalable to large systems in the near term while providing a finite-sample validity guarantee absent from standard held-out evaluation. The result relies on the assumption that the calibration dataset is i.i.d. with the distribution to be verified, which can be violated by distribution shift or simulator mismatch. We therefore combine the nominal certificate with physically interpretable sample-space adversarial attacks, a concept widely used in the AI field to investigate model fragility. Case studies on three-phase grid-edge flexibility coordination with 1{,}000-agent AI models with independent parameters verify the finite-sample safety guarantee and demonstrate the value of integrating adversarial attacks into a rolling-window training-certification-deployment flow.

\bibliographystyle{IEEEtran}
\bibliography{ref}

\end{document}